\newcommand{\p}[1]{\medskip \noindent \textbf{{#1}.}}
\newcommand{\eq}[1]{Equation~(\ref{eq:#1})}
\newcommand{\fig}[1]{Figure~\ref{fig:#1}}
\journalname{Autonomous Robots}
\begin{document}

\title{Towards Balanced Behavior Cloning from Imbalanced Datasets}

\author{Sagar Parekh, Heramb Nemlekar, and Dylan P. Losey}

\institute{
    S. Parekh \at
    Mechanical Engineering Department, Virginia Tech \\
    \email{sagarp@vt.edu}
    \and
    H. Nemlekar \at 
    Mechanical Engineering Department, California State University, Northridge. \email{heramb.nemlekar@csun.edu}
    \and 
    D. Losey \at
    Mechanical Engineering Department, Virginia Tech \\
    \email{losey@vt.edu}
}

\maketitle

\begin{abstract}

Robots should be able to learn complex behaviors from human demonstrations. In practice, these human-provided datasets are inevitably \textit{imbalanced}: i.e., the human demonstrates some subtasks more frequently than others. State-of-the-art methods default to treating each element of the human's dataset as equally important. So if --- for instance --- the majority of the human's data focuses on reaching a goal, and only a few state-action pairs move to avoid an obstacle, the learning algorithm will place greater emphasis on goal reaching. More generally, misalignment between the relative amounts of data and the importance of that data causes fundamental problems for imitation learning approaches. In this paper we analyze and develop learning methods that automatically account for mixed datasets. We formally prove that imbalanced data leads to imbalanced policies when each state-action pair is weighted equally; these policies emulate the most represented behaviors, and not the human's complex, multi-task demonstrations. We next explore algorithms that rebalance offline datasets (i.e., reweight the importance of different state-action pairs) without human oversight. Reweighting the dataset can enhance the overall policy performance. However, there is no free lunch: each method for autonomously rebalancing brings its own pros and cons. We formulate these advantages and disadvantages, helping other researchers identify when each type of approach is most appropriate. We conclude by introducing a novel meta-gradient rebalancing algorithm that addresses the primary limitations behind existing approaches. Our experiments show that dataset rebalancing leads to better downstream learning, improving the performance of general imitation learning algorithms without requiring additional data collection. See our project website: \url{https://collab.me.vt.edu/data_curation/}.
\end{abstract}

\keywords{Imitation Learning, Dataset Quality, Human-robot Interaction, Multi-Task Learning}

\maketitle

\begin{figure*}
    \centering
    \includegraphics[width=0.65\linewidth]{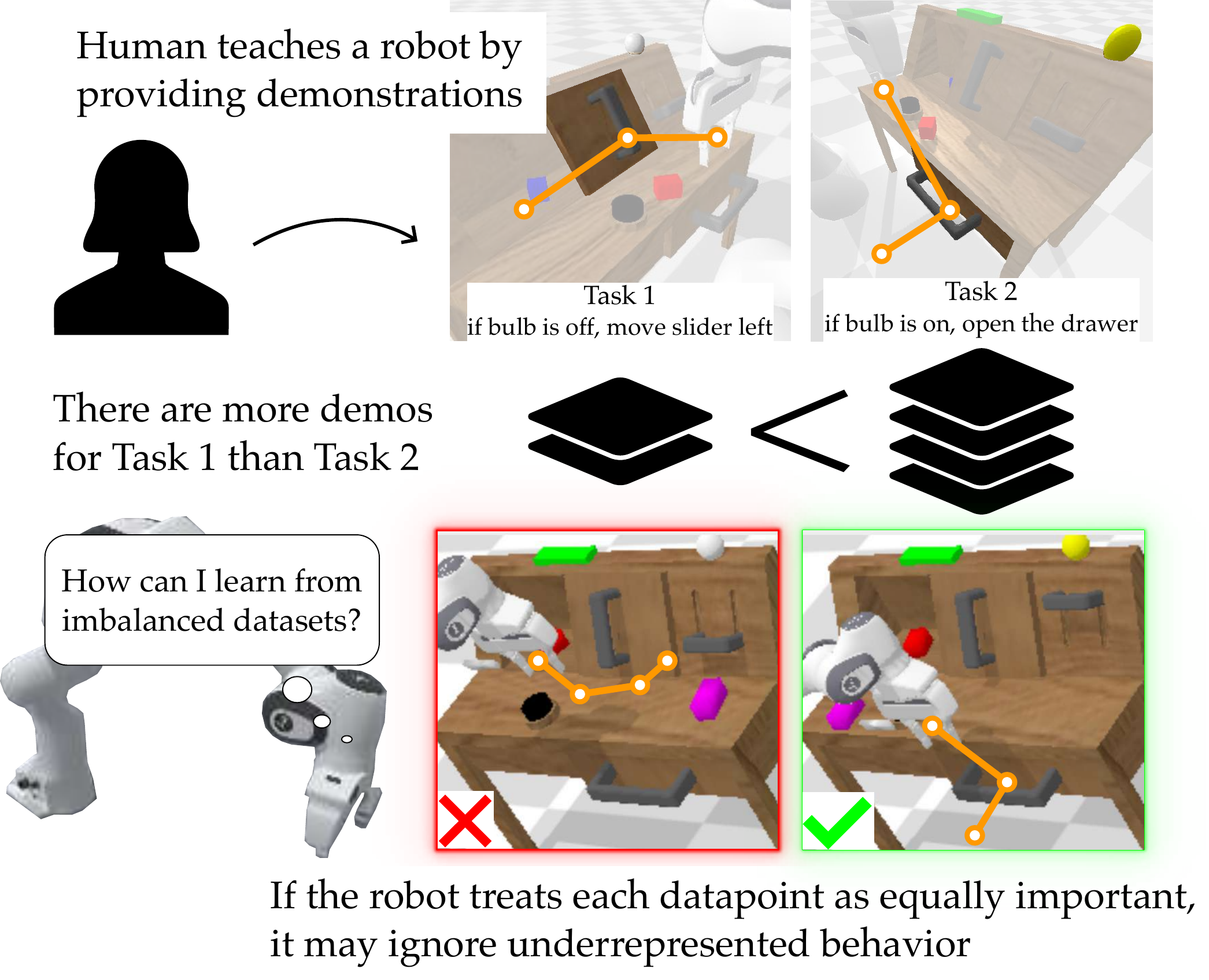}
    \caption{Robot learning how to open a drawer and move a slider from offline human demonstrations. Standard imitation learning equally weights each state-action pair. This results in a policy that imitates the dominant movement patterns seen in the dataset. When the dataset represents one behavior more commonly, imitation learning will learn that behavior at the cost of the other. However, the underrepresented behaviors may not necessarily be unimportant. For instance, opening a drawer and moving the slider require the robot to execute very different motions in different parts of the state space. When the dataset has disproportionately more demonstrations for opening drawer, by imitating the dominant behavior the robot learns to open the drawer but can struggle with moving the slider. So, how do we learn a balanced policy from imbalanced datasets?}
    \label{fig:front}
\end{figure*}

\section{Introduction}

Human teachers have multiple tasks that they want their robots to learn.
Humans therefore provide a dataset of examples, showing the different tasks and their desired behavior across these tasks.
Within the human's examples there are some movement patterns that show up frequently, and other patterns that are only demonstrated once or twice.
For instance, consider a human teaching a robot arm to organize a desk (see \fig{front}).
This may require the robot to pick up objects, open drawers or cabinets, and then put away those objects.
On the one hand, there are certain behaviors --- like reaching for the desired item --- that are commonly observed across all the human's demonstrations.
On the other hand, there are alternate behaviors that are seen rarely in the demonstrations, but are nevertheless important for the robot to learn.
For instance, after grasping the desired object, the three tasks --- opening a drawer, moving a slider, picking up an object --- all require the robot to move in different directions.

More generally, when learning complex or multi-part tasks, robot learners are often provided with \textit{imbalanced} datasets.
Some subtasks are overrepresented in these datasets, while other subtasks are underrepresented.
For instance, driving datasets predominantly contain everyday on-road conditions with very few rare yet safety-critical scenarios, such as occluded and jaywalking pedestrians~\cite{samadi2024good}.

As we will show --- when using standard imitation learning algorithms --- robots trained on imbalanced datasets learn to focus on the commonly represented movement patterns, often ignoring the infrequent behaviors.
Returning to our desk example: if the robot sees a disproportionately large number of demos for opening the drawer, but very few for moving the slider, the robot may overfit to the drawer task while failing to move the slider (see \fig{front}).
This is a practical issue even when the robot is learning a single task.
Take the case of picking up objects: if majority of the human's demonstrations are for picking up objects that are placed on the left side of the table, the common behavior would be for the robot arm to move to towards the left.
A robot trained on this dataset could grasp the common behavior (e.g., left objects), but struggle when it encounters objects on the right.
Other instances of data imbalance include learning tasks with constraints (e.g., reaching a goal while avoiding obstacle regions), or when learning from multiple human teachers with different skill levels (e.g., some users provide optimal demonstrations while others are noisy).

This issue has been known to practitioners and is particularly relevant to recent efforts in collecting large-scale robot datasets containing multiple tasks~\cite{o2024open,khazatsky2024droid}. 
However, there is little theoretical understanding of how data imbalance affects the robot's policy and how it can be effectively mitigated ---
practitioners often balance their data manually based on intuition to ensure equal learning of all behaviors~\cite{team2024octo}.

In this paper, we theoretically investigate the fundamental problem of imitation learning from imbalanced datasets. We begin by showing that the problem arises from a key assumption in standard imitation learning algorithms that each data point contributes equally to the learning process. Specifically, we theorize that:
\begin{center}
    \textit{Equally weighing each state-action pair in an imbalanced dataset leads to imbalanced learning.}
\end{center}
We then explore different strategies for weighing the data samples to learn a more balanced policy. 
Our analysis indicates that the optimal balance depends on two factors: the relative difficulty of learning the behaviors in the dataset and how accurately we want to learn each behavior.
Based on these findings, we highlight the limitations of current methods for automatically balancing robot datasets and introduce a new meta-gradient approach to address them.

Overall, we make the following contributions:


\p{Formalizing Behaviors} We formalize the dataset of demonstrations as a collection of sub-policies. Each sub-policy represents distinct behaviors required to complete the overall task.
For a robot to perform effectively, it must learn each sub-policy regardless of any imbalance in the dataset.

\p{Theoretical Analysis of Imbalanced Learning} We theoretically demonstrate that training on an imbalanced dataset leads to policies that are biased towards more frequently occurring behaviors.
We validate these findings through experiments, empirically showing that imbalance can impair the robot's ability to generalize across all behaviors in the dataset.

\p{Important Considerations for Balancing Datasets} We provide a theoretical examination of existing approaches that balance datasets and demonstrate experiments highlighting their limitations.
Further, we provide insights into the important considerations for balancing datasets.

\p{Meta-Gradient Method for Balancing} To overcome the limitations of prior methods, we introduce a meta-gradient approach for balancing datasets.
This method iteratively calculates the best possible performance achievable for each behaviors and leverages that to balance the dataset, enabling the policy to learn rare behaviors without overfitting to frequent ones.

\section{Related Work}

Imitation learning enables robots to acquire new skills by mimicking the demonstrations of a human expert and potentially generalizing beyond what was shown \cite{osa2018algorithmic}. 
However, its effectiveness is heavily dependent on the quality of the data that is provided to the robot \cite{samadi2024good,hejna2025robot}. 
In this section, we first review prior work that examines how different data attributes impact the learned robot behavior, and then focus on approaches that address the challenge of learning from imbalanced datasets containing a mixture of behaviors.

\subsection{Assessing Data Quality in Imitation Learning}

Previous research has proposed various metrics to quantify what makes a good dataset for imitation learning.
One well-known factor is data optimality, i.e., whether the demonstrations accurately depict the desired behavior \cite{gavenski2022resilient,sagheb2025counterfactual,zare2024survey}.
Another closely related property is consistency in the demonstrated actions, which reduces the ambiguity in how a behavior must be performed \cite{gandhi2023eliciting}.
Both these attributes determine how well the robot can imitate the demonstrated behavior.

However, simply replicating what is shown in the demonstrations is not enough --- one of the main goals in imitation learning is to transfer the learned behavior to new situations. 
To make this possible, the dataset needs to cover a wide variety of states or scenarios \cite{belkhale2024data,shi2025diversity}. 
This need for diversity has led to a push for creating large-scale robot datasets \cite{o2024open,khazatsky2024droid} that include demonstrations from many different tasks and environments. 
Yet, when these datasets are directly utilized with standard imitation learning algorithms, they do not always lead to improved performance unless the data is hand-selected, for example, when training OpenVLA \cite{kim2024openvla} the DROID dataset \cite{khazatsky2024droid} was significantly downsampled to improve policy performance.

Recent work suggests this may be caused by common, repetitive behaviors in the dataset that dominate the learning process \cite{hejna2024re}, forcing designers to manually reduce their weightage \cite{team2024octo}.
This indicates that diversity itself is not sufficient: another important aspect of the training data is the \textit{relative proportion} of data points representing distinct behaviors.
While factors such as optimality and diversity have received significant attention, the effects of varying proportions of different behaviors within a dataset remain underexplored.
We therefore seek to analyze how these proportions influence the robot's learning process, and explore how robots can learn more effectively even when the data is imbalanced.

\subsection{Learning from Imbalanced Data}

The challenge of learning from imbalanced data has been widely studied outside of the robotics domain, in image classification tasks \cite{fernandez2018learning}. Existing methods address this issue by adjusting the data proportions or their learning costs \cite{johnson2019survey,altalhan2025imbalanced}. 
Data-level approaches include undersampling the majority classes \cite{devi2020review,koziarski2020radial,lin2017clustering} or oversampling the minority class to balance the data \cite{fernandez2018smote,dablain2022deepsmote,douzas2018improving}.
On the algorithm side, cost-sensitive learning methods achieve balance by assigning higher weights or penalties to rare, often misclassified, samples to increase their importance during training \cite{khan2018cost}.
In theory, upsampling the data is equivalent to increasing its weight in the learning objective \cite{obuchi2024resampling}.

While the problem of data imbalance has been extensively studied in classification problems, it has not achieve the same rigorous examination in robotics, specifically, imitation learning.
Imitation learning being a regression problem where we must iteratively predict an action (or a distribution over the action space in the case of stochastic policies) differs fundamentally from classification where we learn a decision boundary to group data into categories.
Consequently, the methods to address data imbalance must be tailored to the robotics domain.
There have been some research that focuses on balancing the data and these existing approaches fall into two categories.
Some assume prior knowledge of the distinct behaviors in the training data \cite{hejna2024re,agia2025cupid}, while others learn to recognize the behaviors using a small, additional labeled dataset \cite{fu2023ess,xu2022discriminator,du2023behavior,jang2024safedice}.
For instance, \cite{xu2022discriminator,zhang2023discriminator} trains a discriminator to distinguish between optimal and sub-optimal demonstrations in a large unlabeled dataset using a small amount of expert data for supervision.
Once the behaviors are identified, the dataset is balanced by either collecting additional data to fill the gaps \cite{hou2023shaping,zha2025guiding}, or by re-weighting the existing data \cite{hejna2024re,chen2025curating}.
In \cite{xu2022discriminator}, the discriminator output is used to weigh the behavior cloning loss for each data sample.

A central challenge across these methods is deciding the \textit{desired balance} --- e.g., selecting appropriate weights for each behavior or determining how frequently each behavior should be sampled. 
Furthermore, most prior work focuses on data imbalance in terms of data optimality --- the dataset contains a few optimal demonstrations and a large number of suboptimal demonstrations.
Recently, though, new research has emerged that explicitly addresses data imbalance with respect to behaviors.
We next review these strategies.

\subsection{Determining the Desired Balance}

Deciding how the different behaviors in a dataset should be weighed requires quantifying their `goodness' with respect to some target domain. 
A straightforward approach is to execute the current policy in the real world and change weights based on observed performance \cite{argall2009automatic,agia2025cupid,chen2025curating}.
Alternatively, we can determine weights based on some reference data \cite{xu2022discriminator} or by assuming access to an advantage function for rating the demonstrated actions \cite{wang2018exponentially}.
While effective, these approaches can be costly and impractical due to their reliance on real-world evaluations or specialized domain knowledge.

This raises the question of whether biases in the initial training dataset can be mitigated without relying on additional task-specific feedback.
Prior works have suggested reweighting the data based on training performance \cite{hejna2024re,dass2025datamil}.
Specifically, \cite{dass2025datamil} uses held-out demonstrations to track training progress and increases the learning weights of samples that perform poorly. 
On the other hand, \cite{hejna2024re} computes training loss with respect to a reference policy that is trained on the original, unweighted dataset.
Each of these approaches has limitations. For example, the latter approach fails to learn a good reference when a small proportion of the dataset is sub-optimal, while the former is susceptible to distribution shift issues typical of imitation learning. 

Hence, in this work, we characterize various forms of balance that can be achieved through monitoring training loss. 
Building on this, we propose a novel meta-gradient approach to learn each behavior equally well with respect to its best possible training performance.
Overall, our work provides insights into the theoretical and practical benefits of different approaches for balancing robot data in the absence of target information.
\section{Problem Statement}

We consider settings where a human teaches a robot.
The human provides demonstrations, and the robot learns a policy from these demonstrations.
Each demonstration is a sequence of state-action pairs, i.e., at a given state, the human shows the robot which action to take. 
Typically, these state-action pairs do not follow a single pattern or behavior as the human can have different reasons for choosing actions in different states.
Their decisions are influenced by various factors such as the task they are demonstrating (in multi-task demonstrations), by the constraints in the environment (e.g., an obstacle they must avoid), their skill level (how optimally can they demonstrate), etc.
As an example, when demonstrating how to pick up an object from the table, the human's behavior is dictated by the location of the object.
Consequently, the human would take different actions depending on whether the object is on the left side or the right side of the table.
To imitate the intended task, the robot must capture all the different behaviors present in the demonstrations.
In this section, we formalize these different behaviors as sub-policies of the human, and discuss how existing methods learn a policy from the human's data. 

\p{Robot} Let $s \in \mathcal{S}$ be the system state and let $a \in \mathcal{A}$ be the the robot's action.
The state updates according to the dynamics: $s' = T(s, a)$.
We can observe the system state $s$, but we do not have access to the dynamics $T$.

\p{Human} The human teacher demonstrates their desired behaviors to the robot learner.
Specifically, the human provides an offline dataset of expert state-action pairs: $\mathcal{D} = \{(s_1, a_1) \ldots (s_N, a_N)\}$. 

When providing these demonstrations, the human has some overall policy $\pi_h(a \mid s)$ in mind. This policy models the primary task they want the robot to perform (e.g., picking up an object) as well as additional conditions that the robot should satisfy (e.g., do not run into the table).
Alternatively, the human may want to teach multiple tasks to the robot, in which case the policy may choose different actions in a state depending on the task (e.g., open a drawer and move a slider).
Without loss of generality, we can write $\pi_h(a \mid s)$ as a combination of $k$ different components or sub-policies $\pi_{i}$ that represent the distinct behaviors that humans exhibit in different subsets of the state space $\mathcal{S}_i \subseteq \mathcal{S}$.
\begin{equation*}
    \pi_h(a \mid s) = \pi_i(a \mid s_i), \text{ where } s_i \in \mathcal{S}_i
\end{equation*}
We assume that the state spaces for each sub-policy are disjoint, i.e., $\bigcap\limits_{i=1}^{k} \mathcal{S}_i = \emptyset$. For example, the human would choose to move to the left is states $\mathcal{S}_1$ where the object is placed on the left, and then take actions to move right in states $\mathcal{S}_2$ where the object is placed on the right.
In case the human is suboptimal and chooses to move right even when the object is on the left, we can consider them as a third subset $\mathcal{S}_3$. 
Conversely, if the data does not contain multiple subtasks or there are no special conditions to be met (e.g., thee object is always in the same location), the human policy $\pi_h(a \mid s)$ will reduce to a single sub-policy over the states $\mathcal{S}_i = \mathcal{S}$.

\p{Policy} The robot learns a control policy $\pi_{\theta}(a \mid s)$ from dataset $\mathcal{D}$.
This policy is instantiated as a neural network with weights $\theta$.
Ideally, this learned policy should match the expert's policy $\pi_h(a \mid s)$.
Since we formulate $\pi_h(a \mid s)$ as a combination of $k$ sub-policies, the learned policy should match all $k$ sub-policies of the expert to effectively imitate each demonstrated task.
Typically, imitation learning is formulated as a supervised learning problem where we maximize the likelihood of action $a$ observed in the dataset given state $s$
\begin{equation} \label{eq:P1}
    \mathcal{L}(\theta) = -\mathop{\mathbb{E}}\limits_{(s, a) \in \mathcal{D}} \log{\pi_\theta(a \mid s)}
\end{equation}
Optimizing this objective is equivalent to minimizing the KL divergence between the human policy and the robot policy. This is shown below:
\begin{align*}
    &D_{KL} (\pi_h(a \mid s) \mid\mid \pi_\theta(a \mid s)) = \mathop{\mathbb{E}}\limits_{(s, a) \in \mathcal{D}} \log \frac{\pi_h(a \mid s)}{\pi_{\theta}(a \mid s)} \\
    &= \mathop{\mathbb{E}}\limits_{(s, a) \in \mathcal{D}} \log \pi_h (a \mid s) - \log \pi_\theta (a \mid s) \\
    &= -\mathop{\mathbb{E}}\limits_{(s, a) \in \mathcal{D}} \log{\pi_\theta(a \mid s)}
\end{align*}
We can ignore the $\log \pi_h(a \mid s)$ term since it does not depend on $\theta$.
Hence, to learn the weights $\theta$, we minimize the KL divergence between the human and the robot policy, i.e., the objective for imitation learning is:
\begin{equation} \label{eq:P2}
    \mathcal{L}_{BC} = \mathop{\mathbb{E}}\limits_{(s, a) \in \mathcal{D}} D_{KL} (\pi_h(a \mid s) \mid\mid \pi_\theta(a \mid s))
\end{equation}

This expectation is calculated by equally weighing each state-action pair in the dataset.
Minimizing this objective allows the robot policy to match the human's policy $\pi_h(a \mid s)$ and mimic the human's actions across all states in $\mathcal{S}$.
But how well does the policy learn to mimic each of the human's individual sub-policies $\pi_i(a \mid s)$?
Remember that the dataset can be imbalanced with some states $s_i$ occurring more frequently than others.
In the following section, we theoretically examine whether a robot policy trained using standard behavior cloning objective can correctly imitate all behaviors in the dataset.

\section{Learning from Heterogeneous Data}\label{sec:analysis}

We examine the problem of learning a policy from an offline dataset $\mathcal{D}$ of state-action pairs.
As we defined in the previous section, this dataset is obtained from human teachers who act based on a combination of $k$ behaviors or sub-policies.
Particularly, each sub-policy $\pi_i(a | s)$ governs the human's behavior in a distinct region of the state space $\mathcal{S}_{i}$. So each state-action pair in $\mathcal{D}$ is associated with a specific sub-policy. 
In general, the dataset may contain different amounts of state-action samples for each sub-policy.
In this section, we use $\rho_{i}$ to denote the proportion of dataset samples that belong to a sub-policy $\pi_{i}$, and analyze how these proportions impact the robot's learning. 
Specifically, we first determine how the parameters of the robot's policy are biased by an unequal distribution of behaviors when using standard behavior cloning. We then conduct experiments to show how data distribution impacts performance in a simulated robot manipulation task.

\subsection{How does Data Imbalance Affect Policy Parameters?}

We begin by deriving the policy learned by the robot when applying standard behavior cloning approaches to the dataset $\mathcal{D}$.
Consider the behavior cloning objective from \eq{P2}. 
To understand how this objective is related to the behavior proportions $\rho_{i}$, we will unpack the expectation term as follows.
The expectation is evaluated over all state-action pairs $(s, a)$, i.e., it weighs each sample with the joint probability $p(s, a)$ of observing it in the dataset. 
We can rewrite this equation by expanding the expectation and decomposing the human policy into a mixture of multiple sub-policies as:
\begin{align} \label{eq:P3}
    \mathcal{L}_{BC} &= \sum\limits_{(s, a) \in \mathcal{D}} p(s, a) D_{KL} (\pi_h (a | s) \mid\mid \pi_\theta (a | s)) \nonumber \\
    &= \sum\limits_{i=1}^{k} \sum_{(s, a) \in \mathcal{D}_{i}} p(s, a) D_{KL} (\pi_i \mid\mid \pi_\theta)
\end{align}
The second step follows from the observation that the human employs each sub-policy in a distinct region of the state space, and so we divide the dataset into subsets $\mathcal{D}_{i}$ containing states and actions corresponding to sub-policies $\pi_{i}$. Here we use $\pi_{i}$ and $\pi_{\theta}$ as simplified notations for $\pi_{i}(a|s)$ and $\pi_{\theta}(a|s)$.

We can further break this down by applying Bayes' rule to express the joint probability as a product of the probability of a state-action pair belonging to a sub-policy subset $p(\mathcal{D}_{i})$, and the conditional probability of observing that sample given that subset, $p(s, a|\mathcal{D}_{i})$.
\begin{align} \label{eq:P4}
    \mathcal{L}_{BC} &= \sum\limits_{i=1}^{k} \sum_{(s, a) \in \mathcal{D}_{i}} p(s, a | \mathcal{D}_{i})p(\mathcal{D}_{i}) D_{KL} (\pi_i \mid\mid \pi_\theta) \nonumber \\
    &= \sum\limits_{i=1}^{k} \rho_{i} \mathop{\mathbb{E}}_{(s, a) \in \mathcal{D}_{i}} D_{KL} (\pi_i \mid\mid \pi_\theta)
\end{align}
Here we recognize that $p(\mathcal{D}_{i})$ represents the proportion $\rho_{i}$ of data samples that belong to the sub-policy $\pi_{i}$. The expectation in \eq{P4} is now over the subset $\mathcal{D}_{i}$ rather than over the entire data as in \eq{P2}.
From this objective, we see that standard behavior cloning weighs the divergence between the robot's policy $\pi_{\theta}$ and the sub-policy $\pi_{i}$ by the frequency of observing that sub-policy. 
But how does this weighted objective affect the learned policy?

We theorize that a policy trained on this objective is more likely to mimic the prominent behaviors in the dataset while ignoring less frequent ones. To formalize this intuition, we analytically calculate the optimal parameters of the robot policy for standard behavior cloning. 
Specifically, we adopt a univariate Gaussian model for both human and robot policies and present the following result:

\begin{proposition} \label{prop:1}
    Let the robot's policy $\pi_{\theta}$ and the human's $k$ sub-policies be Gaussian with parameters $(\theta, \sigma)$ and $(\theta_i, \sigma_i)$:
    \begin{align*}
        \pi_{\theta}(a | s) &= \mathcal{N}(\theta s, \sigma), \text{ where } s \in \mathcal{S} \\
        \pi_i(a | s) &= \mathcal{N}(\theta_i s_i, \sigma_i), \text{ where } s_i \in \mathcal{S}_i \subset \mathcal{S}
    \end{align*}
    Then, using standard behavior cloning, the learned parameters $\theta$ of the robot's policy are a weighted sum of the sub-policy parameters $\theta_{i}$, where each weight $\rho_{i}$ is the joint probability of states and actions from the sub-policy $\pi_i$.
    \begin{align*}
        \theta = \sum\limits_{i=1}^k \rho_i \cdot \theta_i
    \end{align*}
    Hence, the behavior cloning objective biases the robot's policy towards more frequently observed sub-policies.
\end{proposition}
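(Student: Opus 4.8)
The plan is to reduce the proposition to a first-order optimality condition on the convex objective in \eq{P4}. First I would substitute the closed-form expression for the KL divergence between two univariate Gaussians, $D_{KL}(\mathcal{N}(\mu_1,\sigma_1)\,\|\,\mathcal{N}(\mu_2,\sigma_2)) = \log(\sigma_2/\sigma_1) + (\sigma_1^2 + (\mu_1-\mu_2)^2)/(2\sigma_2^2) - 1/2$, identifying $\mu_1 = \theta_i s,\ \sigma_1 = \sigma_i$ for the human sub-policy and $\mu_2 = \theta s,\ \sigma_2 = \sigma$ for the robot. Each term then becomes $D_{KL}(\pi_i\,\|\,\pi_\theta) = \log(\sigma/\sigma_i) + (\sigma_i^2 + s^2(\theta_i - \theta)^2)/(2\sigma^2) - 1/2$, and plugging this into \eq{P4} expresses $\mathcal{L}_{BC}$ as a sum over $i$ of $\rho_i$ times an expectation over $\mathcal{D}_i$ of this expression.

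Next I would differentiate $\mathcal{L}_{BC}$ with respect to $\theta$. Only the $(\theta_i - \theta)^2$ piece depends on $\theta$, so $\partial \mathcal{L}_{BC}/\partial\theta = \sigma^{-2}\sum_{i=1}^k \rho_i\, \mathbb{E}_{(s,a)\in\mathcal{D}_i}\!\big[\, s^2(\theta - \theta_i)\,\big]$. The objective is a strictly convex quadratic in $\theta$ (leading coefficient $\sigma^{-2}\sum_i \rho_i \mathbb{E}_{\mathcal{D}_i}[s^2] > 0$), so setting the derivative to zero gives the unique global minimizer, which solves $\theta \sum_i \rho_i \mathbb{E}_{\mathcal{D}_i}[s^2] = \sum_i \rho_i \theta_i \mathbb{E}_{\mathcal{D}_i}[s^2]$. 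Under the modeling assumption that the state statistics are comparable across the sub-policy regions --- i.e., $\mathbb{E}_{\mathcal{D}_i}[s^2]$ is common to all $i$ (e.g., normalized states), or equivalently that this factor is folded into the definition of the weights --- this collapses to $\theta\sum_i \rho_i = \sum_i \rho_i \theta_i$, and since $\sum_i \rho_i = 1$ we conclude $\theta = \sum_{i=1}^k \rho_i \theta_i$. The bias claim then follows by inspection: the learned slope is a convex combination of the per-behavior slopes with coefficients equal to the data proportions, so an overrepresented sub-policy ($\rho_i$ large) dominates $\theta$ while an underrepresented one ($\rho_i$ small) is nearly ignored.

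I expect the main obstacle to be precisely the $s^2$ factor that the Gaussian KL introduces: the clean identity $\theta = \sum_i \rho_i \theta_i$ only holds once we argue this factor is common across the sub-policy regions (or absorb it into the weights $\rho_i$), so the write-up must make that assumption explicit rather than silently dropping the term. Two secondary points are worth noting for rigor: that $\mathcal{L}_{BC}$ is quadratic and convex in $\theta$, so the stationary point really is the global minimum; and that jointly optimizing $\sigma$ does not change the $\theta$ solution, since the $\sigma^{-2}$ prefactor is common to every term in the $\theta$-gradient and hence cancels when the gradient is set to zero.
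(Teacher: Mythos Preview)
Your proposal is correct and follows essentially the same route as the paper: substitute the closed-form Gaussian KL into \eq{P4}, drop the $\theta$-independent terms, differentiate the resulting quadratic in $\theta$, set the gradient to zero, and invoke the normalization assumption $\mathbb{E}_{\mathcal{D}_i}[s^2]=1$ together with $\sum_i\rho_i=1$ to obtain $\theta=\sum_i\rho_i\theta_i$. You are, if anything, slightly more careful than the paper in flagging the $s^2$ assumption up front and in noting that the $\sigma^{-2}$ prefactor cancels so that the $\theta$-solution is unaffected by $\sigma$.
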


\begin{proof}
    We start with the behavior cloning loss in \eq{P4} and use the analytical form of KL-divergence for Gaussian distributions to express it as a function of the policy parameters.
    \begin{align} \label{eq:P5}
        \mathcal{L}_{BC} &= \sum\limits_{i=1}^{k} \rho_{i} \mathop{\mathbb{E}}_{(s, a) \in \mathcal{D}_{i}} \Bigg(\log\frac{\sigma}{\sigma_i} + \frac{\sigma_{i}^{2} + (\theta_{i} s - \theta s)^{2}}{2\sigma^{2}} - \frac{1}{2} \Bigg) \nonumber\\
        &\propto  \sum\limits_{i=1}^{k} \rho_{i} \mathop{\mathbb{E}}_{(s, a) \in \mathcal{D}_{i}} (\theta_{i}s - \theta s)^{2}
    \end{align}
    We obtain the second step by dropping all terms that are not dependent on the learnable parameters $\theta$.
    Since KL-divergence is convex, we can now derive the optimal parameters by taking the gradient of the simplified objective in \eq{P5} with respect to $\theta$ and equating it to zero.
    \begin{align*}
        \nabla_{\theta} \mathcal{L}_{BC} &= 0  \\
        \sum\limits_{i=1}^{k} \rho_{i} \mathop{\mathbb{E}}_{(s, a) \in \mathcal{D}_{i}} 2(\theta - \theta_{i}) s^{2} &= 0 \\
        \theta \sum\limits_{i=1}^{k} \rho_{i} \left( \mathop{\mathbb{E}}_{(s, a) \in \mathcal{D}_{i}} s^{2} \right) - \sum\limits_{i=1}^{k}  \rho_{i}\theta_{i} \left(\mathop{\mathbb{E}}_{(s, a) \in \mathcal{D}_{i}} s^{2} \right) &= 0
    \end{align*}
    In the last step, we take $\theta$ outside the expectation since it is independent of the states and actions. The $s^{2}$ term represents the magnitude of the task states. In practice, we can normalize the state values such that $\mathop{\mathbb{E}}_{(s, a) \in \mathcal{D}_{i}} s^{2} = 1$. With this assumption and knowing that the relative proportions sum to one (i.e., $\textstyle\sum_{i=1}^{k} \rho_{i} = 1$), we can rearrange the terms to calculate the optimal parameters as:
    \begin{equation} \label{eq:P6}
        \theta = \sum_{i=1}^k \rho_i \cdot \theta_i
    \end{equation}
    Therefore, we find that the learned policy parameters are a weighted sum of the parameters of each individual sub-policy, where the weights $\rho_{i}$ are the probability density of the states and actions associated with that sub-policy.
\end{proof}

Proposition \ref{prop:1} demonstrates how the relative proportions of behaviors in the dataset can influence the robot's policy when using standard behavior cloning. If the data contains more state-action pairs for a particular sub-policy than others, that behavior will have a higher weight $\rho_{i}$ and will thus play a greater role in deciding the robot's actions.
On one hand, this is reasonable because we want the robot to be better at the behaviors that it encounters more often. But on the other hand, the frequency of a sub-policy may not necessarily indicate its importance; e.g., the underrepresented sub-policies may encode safety-critical behaviors that are equally important to the task performance. 

So far, our analysis has assumed Gaussian policies with linear parameters. However, our insights can be extended to non-linear policies modeled using neural networks.
We do this by evaluating the worst-case expected loss for each sub-policy in \eq{P4}. 
Let $\mathcal{L}_{BC} = L$ be the loss at which the training converges.
The worst case for a sub-policy $\pi_{i}$ will be when this loss is entirely because of that sub-policy, while all other sub-policies are learned perfectly such that $D_{KL}(\pi_{j\neq i}||\pi_{\theta}) \rightarrow 0$ and:
\begin{equation*}
L = \rho_{i} \cdot \mathbb{E}_{(s,a)\in\mathcal{D}_{i}} D_{KL}(\pi_{i}||\pi_{\theta})    
\end{equation*}
More generally, the training loss for each sub-policy will be bounded as:
\begin{equation} \label{eq:P8}
\mathbb{E}_{(s,a)\in\mathcal{D}_{i}} D_{KL}(\pi_{i}||\pi_{\theta}) \leq \frac{\mathcal{L}_{BC}}{\rho_{i}}
\end{equation}
This upper bound increases as we decrease the proportion $\rho_{i}$ of samples, making it difficult in practice to accurately learn the underrepresented behaviors. 

Overall, \eq{P6} and \eq{P8} define how the dataset proportions bias the policy learned by the robot when using standard behavior cloning.
In what follows, we will experimentally demonstrate how this can negatively impact the policy performance.

\subsection{Experiments with Imbalanced Data}\label{sec:sim_1}
Our analysis established how the policy learned with standard behavior cloning is affected by the relative proportions of behaviors in the training dataset.
When these proportions are imbalanced, we expect the policy to under-perform on the underrepresented behaviors, leading to asymmetric rollouts.
Here we empirically demonstrate this effect through controlled simulation experiments.

\p{Environment and Task} We conduct our experiments in the CALVIN environment \cite{mees2022calvin}, an open-source platform that simulates visually rich, tabletop manipulation tasks.
CALVIN is used as a benchmark simulation for learning long-horizon manipulation tasks.
As illustrated in \fig{task}, the environment includes a $7$ degrees-of-freedom (DoF) Franka Emika Panda robot arm and a variety of interactive objects, including sliders, workbench, differently-colored blocks, buttons and switches for operating lights, and drawers.

\begin{figure*}
    \centering
    \includegraphics[width=1\linewidth]{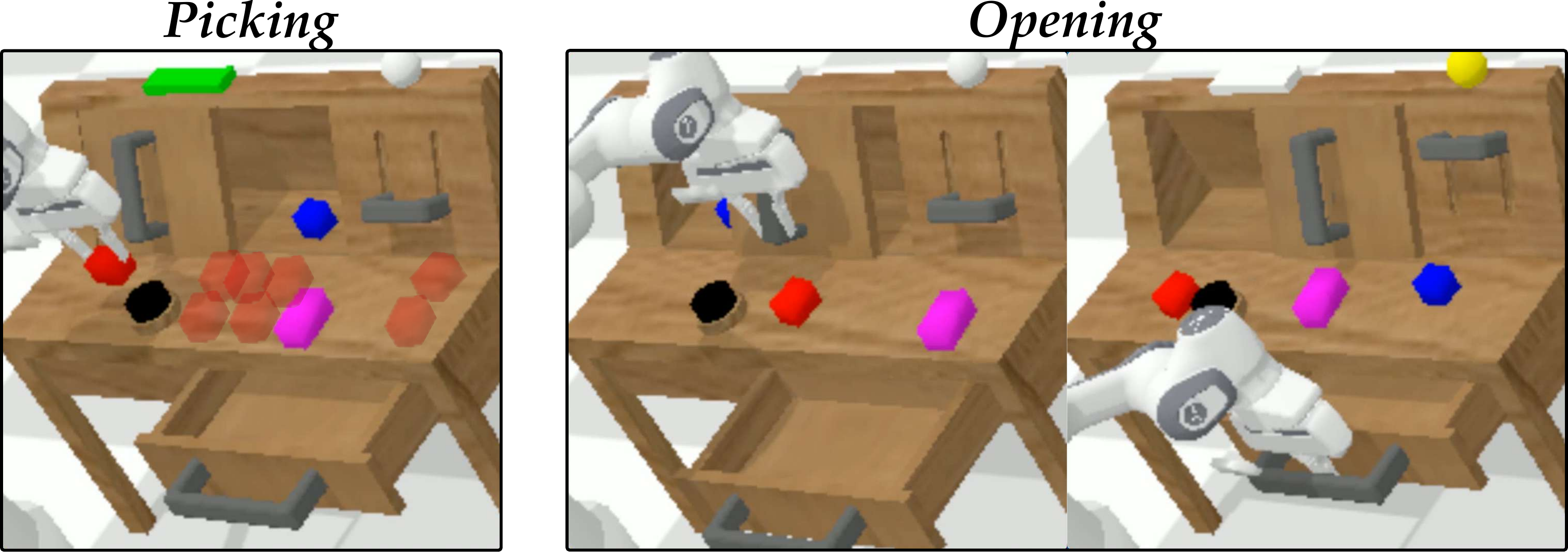}
    \caption{The manipulation tasks we perform in our experiments. On the left, we see \textit{picking} where the robot must learn to grasp and pick up a red block to a certain height. The red block is initialized at a position that is sampled randomly from three regions: left side of the table, middle of the table, and right side of the table. The possible initialization locations of the red block are shown by transparent overlays. On the right is \textit{opening} which consists of two tasks conditioned on the state of the environment. If the bulb is off, the robot must move a slider to the left and if the bulb is on, the robot must open the drawer. All the objects irrelevant to the task are initialized randomly.}
    \label{fig:task}
\end{figure*}

We design two experiments in this environment. First, a \textit{picking} experiment in which the robot is required to reach for a red block placed on the workbench, grasp it, and lift it to a predefined height.
Next, we design a multi-task experiment \textit{opening} where the robot either opens a drawer or moves a slider on the table.
We generate demonstration data by executing a pre-trained expert policy \cite{reuss2024multimodal} in the environment.
At the beginning of each demonstration, the objects --- including the red block --- are initialized in random configuration.
Specifically, in the first task, the red block's position is sampled from three distinct spatial regions: the left, the middle, and the right areas of the robot's workspace, refer to \fig{task} (left).
In the second task, the drawer and slider locations are fixed, however to condition the policy on the task data, we utilize the state of the light bulb: if the light bulb is on, the robot opens a drawer. If the light bulb is off, the robot instead moves a sliding door, see \fig{task} (right).

\p{Sub-policies}
In the \textit{picking} experiment, although the expert policy is trained to solve the task regardless of the block's initial position, its behavior can be decomposed into three distinct motion strategies, each corresponding to one of the spatial regions: moving to the left ($\pi_1^p$), moving to the center ($\pi_2^p$), and moving to the right ($\pi_3^p$).
These three sub-policies reflect meaningful variations in the robot's motion and form the core behaviors that we seek to evaluate.

In the \textit{opening} experiment, the behavior of the expert policy can be decomposed into two distinct behaviors corresponding to moving the slider door ($\pi_1^o)$ and opening the drawer ($\pi_2^o$). 
While the policy needs to learn both these behaviors, either of the two behaviors are distinct from one another on account of the region of the space the robot must visit as well as the type of motion it must execute to complete the task.
For instance, opening the drawer requires the robot to move to the drawer handle from the top and pull it towards it, whereas moving the slider requires the robot to move straight to the slider door and pull it to the side.

\p{Demonstrations}
Each demonstration is recorded as a sequence of observation-state-action tuples: $\xi = \quad$ $\{(o^1, s^1,  a^1), (o^2, s^2, a^2), \cdots, (o^n, s^n, a^n)\}$.
Observations contain a static RGB image $o_{static} \in \mathbb{R}^{200 \times 200 \times 3}$ from a fixed camera overlooking the entire workspace, an egocentric RGB image $o_{ego} \in \mathbb{R}^{84 \times 84 \times 3}$ from a camera mounted on the robot gripper, and a robot state $s \in \mathbb{R}^8$ comprising of the robot's $7$ joint angles and a binary gripper state.
The action $a \in \mathbb{R}^7$ represents the $6$-dimensional linear and angular velocity of the robot's end-effector and a binary gripper actuation command.

\p{Procedure}
To quantify the effect of dataset imbalance, we begin by constructing a \textit{balanced} dataset containing an equal number of demonstrations for each sub-policy ($\pi_1^p, \pi_2^p, \pi_3^p$ in \textit{picking} and $\pi_1^o, \pi_2^o$ in \textit{opening}).
Specifically, in \textit{picking} we collect $21$ demonstrations per sub-policy, resulting in a total of $63$ demonstrations.
In \textit{opening} we collect $15$ demonstrations per sub-policy, resulting in a total of $30$ demonstrations.
We train a standard behavior cloning policy on this balanced dataset which serves as a baseline for comparison.
To introduce imbalance, we selectively reduce the number of demonstrations corresponding to one sub-policy while keeping the total demonstration count fixed.
More particularly, in \textit{picking} we selectively reduce the demonstrations to $9$ for one sub-policy and increase the demonstrations for other sub-policies to $27$.
This ensures that the total number of demonstrations remains the same as $63$.
In \textit{opening} we reduce the demonstrations to $10$ for one sub-policy at a time while increasing it to $20$ demonstrations for the other sub-policy keeping the total at $30$.
This process is applied independently to each of the sub-policies in the dataset, yielding three and two distinct imbalanced datasets for \textit{picking} and \textit{opening}, respectively.
For each imbalanced dataset, we train a separate behavior cloning policy.
These policies are then evaluated on test scenarios that uniformly cover all three behaviors.
To ensure statistical robustness, we repeat the training and evaluation process $10$ times per dataset.
We report the average success rate for each behavior along with the standard deviation.

\begin{figure*}
    \centering
    \includegraphics[width=1\linewidth]{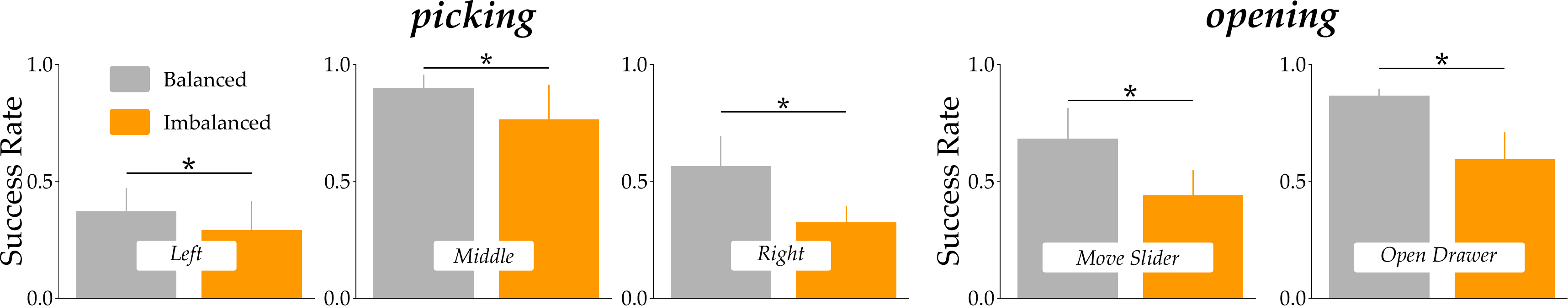}
    \caption{Results of our experiments from Section \ref{sec:sim_1}. We compare the performance of a policy trained on a balanced dataset with one trained on an imbalanced dataset. The balanced dataset contains $21$ demonstrations for each of the three sub-policies in \textit{picking} totaling $63$ demonstrations. In \textit{opening} the balanced dataset contains $15$ demonstrations for each of the two sub-policies totaling $30$ demonstrations. We introduce imbalance by reducing the number of demonstrations for one sub-policy at a time. We train a separate policy for each of the imbalanced datasets as well as a policy that is trained on the balanced dataset. The plots compare the success rate of the policies across $100$ rollouts. The gray bars represent the success rate for the balanced policy and the orange bars represent the success rate of the imbalanced policy for each case. (Left) Results for the three behaviors in \textit{picking}: lifting up the red block that is on the left side, the middle, and the right side of the table. (Right) Results for the two behaviors in \textit{opening}: moving the slider to the left when the bulb is off and opening the drawer when the bulb is on. The vertical bars show the standard deviation and $*$ indicates statistical significance.}
    \label{fig:results_imbalanced}
\end{figure*}

\p{Results}
\fig{results_imbalanced} summarizes the results of our experiments.
The three plots on the left show the success rate for the cases where the initial position of the block is in the region on the left, middle, and right, respectively.
The gray bar in each plot shows the success rate when the policy is trained on the balanced dataset.
The orange bar shows the success rate when the policy is trained on an imbalanced dataset where the specific sub-policy is underrepresented.
The performance of the policy trained on the imbalanced dataset, where the specific behavior is under-represented, is significantly lower than the policy trained on the balanced dataset (left: $t=-2.601, p < 0.05$, middle: $t=-2.344, p < 0.05$, right: $t=-2.893, p < 0.01$). 

The two plots on the right in \fig{results_imbalanced} summarize the results for the \textit{opening} experiment.
The plot on the left shows the success rate of moving the slider when the bulb is off and the plot on the right shows the success rate of opening the drawer when the bulb is on.
We find that the success rate significantly drops when the dataset is imbalanced (moving the slider: $t=-3.728$, $p < 0.01$, opening drawer: $t=-6.657$, $p < 0.001$).

These results support Proposition~\ref{prop:1}, which states that when the dataset consists of multiple behaviors, the proportion of each behavior significantly affects how well the policy can learn each task.
When the behaviors are disproportionately represented in the dataset, the underrepresented behaviors are difficult for the policy to learn reliably.
However, when the dataset is balanced, the policy can achieve better results corresponding to each behavior.

\section{Balancing the Dataset}\label{sec:5}

In the previous section we theoretically and empirically showed that training the robot on a mixed dataset can bias the robot's learning towards behaviors that are more prominent in the dataset. This can lead to poor performance on scarce behaviors that may be equally important.
From \eq{P8}, we observe that we can ensure that a sub-policy $\pi_{i}$ is effectively learned by increasing its weight $\rho_{i}$. However, since the weights represent the relative proportion of data, increasing the weight for one behavior would decrease the weights for the others. 
So how do we \textit{balance} the data to ensure that each behavior is effectively learned?
Balancing in the context of our problem refers to re-weighting the dataset to enhance the policy's overall performance.
In this section we analyze different methods that can be employed to balance a heterogeneous dataset.
Additionally, we discuss the possible applications and limitations of each of these approaches.

\subsection{Equally Weighing Each Behavior}\label{sec:5a}
Following the classification literature \cite{fernandez2018smote,devi2020review}, we can employ the common practice of undersampling the overrepresented behaviors or oversampling the underrepresented behaviors.
This leads to a simple, perhaps naive, approach where we assign equal weights to each sub-policy in the dataset:
\begin{equation} \label{eq:P9}
    \mathcal{L}_{eq-w} = \frac{1}{k} \sum_{i=1}^{k} \mathop{\mathbb{E}}_{(s,a) \in D_{i}} D_{KL}(\pi_{i} \mid\mid \pi)
\end{equation}

In this objective, we effectively change the proportional weighting in \eq{P4} to a uniform weighting, i.e., we replace $\rho_{i}$ with weights $\alpha_{i} = 1/k$ for $i = 1, 2, \cdots, k$. 
To compare this balanced objective with the original behavior cloning objective of \eq{P3}, we reintroduce the sum over state-action pairs and apply Bayes' rule to re-write the objective as 
\begin{equation} \label{eq:P10}
    \mathcal{L}_{eq-w} = \frac{1}{k} \sum_{i=1}^{k} \sum_{(s, a) \in D_{i}} \frac{p(s, a)}{\rho_{i}} D_{KL}(\pi_{i} \mid\mid \pi)
\end{equation}

\eq{P10} implies that, for the robot to learn each sub-policy equally well, the sampling probability for each state-action pair must be scaled down by the frequency of its sub-policy. This is equivalent to sampling the data from a new data distribution $q(s, a) = p(s,a) / \rho_{i}$.
We can prove that this objective enables the robot to learn unbiased parameters by following the same steps as in Proposition 1 to get: 
\begin{equation} \label{eq:P11}
    \theta = \sum_{i=1}^{k} \frac{1}{k} \theta_{i}
\end{equation}
In the non-linear case, this corresponds to having the same worst-case training loss for each sub-policy:
\begin{equation} \label{eq:P12}
\mathbb{E}_{(s,a)\in\mathcal{D}_{i}} D_{KL}(\pi_{i}||\pi_{\theta}) \leq k\mathcal{L}_{eq-w} \quad \forall i \in \{1, \ldots, k\}
\end{equation}

Therefore, \textit{balance} in this case refers to ensuring a common lower-bound on the learning accuracy for each behavior.
As we will demonstrate later in this section, applying $\mathcal{L}_{eq-w}$ for the manipulation task described in Section~\ref{sec:sim_1} results in improved success on both \textit{picking} and \textit{opening} experiments (see \fig{manual_balanced}).

\p{Limitation}
This approach is founded upon two assumptions that limit its scope: ($1$) we want the same bound on training performance for each behavior, and ($2$) it is equally easy to learn each behavior. 

As a counterexample, consider a dataset containing two sub-tasks of varying difficulty: throwing a ball at a moving target (hard) and dropping the ball into a large stationary bin (easy). Typically, the robot would require more data and training iterations to learn the first subtask than the second.
Due to this, when we train the robot using \eq{P9}, despite giving equal weight to both subtasks, the robot may still learn to drop the ball more accurately than throwing, such that $\mathbb{E}[\mathcal{L}_{drop}] \rightarrow 0$ and $\mathbb{E}[\mathcal{L}_{throw}] \rightarrow 2\mathcal{L}_{eq-w}$. 
In this case, we can continue training until $\mathcal{L}_{eq-w} \rightarrow 0$ to get better at throwing the ball, but this may lead the robot to overfit to the dropping subtask.
Another point of consideration is that the same training loss may not guarantee success in both subtasks. For instance, we require greater accuracy to hit the target with the ball than to drop it in a large bin --- thus, we may actually want different training losses for each behavior.

One way we can address both these concerns is by assigning a higher weight to the throwing subtask to learn it more accurately. While this could increase the training loss for the dropping subtask, if done carefully, it may not affect its success rate due to the greater margin of error provided by the size of the static bin. 
We next discuss how to achieve such balance that accounts for both the relative difficulty and precision requirement of the behaviors in the training dataset.

\subsection{Equally Learning Each Behavior (Relatively)}\label{sec:5b}

As discussed in the above example, we may not always want to weigh each behavior equally. In general, we may want some weights $\alpha = [\alpha_{1}, \dots, \alpha_{k}]$ that account for how difficult it is to learn a behavior and how accurately we want to learn it.
We now present an approach for balancing the training data, taking these considerations into account.

From our analysis in Section~\ref{sec:analysis}, we know that if we want to learn a behavior more accurately, we should increase its relative proportion, i.e., its weight.
We can use this insight to dynamically change the weights $\alpha$ during training. 
Specifically, we increase the weights for sub-policies that have a higher expected loss by maximizing the following objective:
\begin{align}\label{eq:P13}
    \mathcal{L}_{eq-l} = \sum_{i=1}^{k} \alpha_{i} \left( \mathbb{E}_{(s,a) \sim\mathcal{D}_{i}} D_{KL} (\pi_{i} || \pi_{\theta}) - \mathcal{L}_{i}^{ref} \right)
\end{align}
Here, $\mathcal{L}_{i}^{ref}$ is the reference loss for each sub-policy and represents its desired accuracy.
The difference:
$$\delta_{i} = \left(\mathbb{E}_{(s,a) \sim\mathcal{D}_{i}} D_{KL} (\pi_{i} || \pi_{\theta}) - \mathcal{L}_{i}^{ref}\right)$$ measures how good or bad the current policy is compared to the target.  
If the current loss is much higher than the reference (i.e., $\delta_{i}$ is high), the learner will try to raise its $\alpha_{i}$ to maximize \eq{P13}.
Choosing a lower reference loss would indicate that we desire greater accuracy for that behavior and vice versa.
In the simplest case, this reference can be set to zero (i.e., $\mathcal{L}^{ref}_{i} = 0$ for $i \in \{1, \ldots, k\}$), indicating that we desire the same training accuracy across all behaviors.

To converge to the optimal weights, we must iteratively update the policy $\pi_{\theta}$ with the changing weights.
This results in a min-max operation where we alternate between two steps: maximizing $\mathcal{L}_{eq-l}$ to update $\alpha$ while keeping $\theta$ constant, and then minimizing $\mathcal{L}_{eq-l}$ with respect to $\theta$ while keeping the updated $\alpha$ fixed.
We can derive the convergence condition for the maximization step by first taking its gradient with respect to $\alpha$:
\begin{equation}\label{eq:P14}
    \nabla_{\alpha} \mathcal{L}_{eq-l} = \left[\delta_{1} + \lambda, \ldots, \delta_{k}  + \lambda \right]
\end{equation}
where $\lambda = -\frac{1}{k}\sum_{i=1}^{k}\delta_{i}$ is a Lagrange multiplier which projects the gradients to keep $\alpha$ within the valid space $\Delta = \{\alpha \mid \sum_{i=1}^{k} \alpha_{i} = 1\}$ of relative proportions.
A more detailed derivation of the gradients is provided in Appendix~\ref{sec:appendix}. 
Equating $\nabla_{\alpha} \mathcal{L}_{eq-l} = 0$ results in the following balanced state at which the weights $\alpha$ converge:
\begin{equation}\label{eq:P15}
    \delta_{1} = \ldots = \delta_{k}
\end{equation}

This means that the robot will try to learn weights that make the difference term $\delta_{i}$ equal for all behaviors.
Therefore, \textit{balance} in this case refers to ensuring that each behavior is learned equally well relative to its corresponding reference $\mathcal{L}_{i}^{ref}$. 
This method of balancing the dataset has been empirically validated in prior work \cite{hejna2024re}, with varying assumptions about the reference loss.
Our contribution supplements these findings by providing a theoretical understanding of the type of balance achieved with this approach.

\p{Limitation}
While $\mathcal{L}_{eq-l}$ addresses the fundamental limitations of the $\mathcal{L}_{eq-w}$, the effectiveness of $\mathcal{L}_{eq-l}$ depends on the choice of $\mathcal{L}_{i}^{ref}$.
An overly \textit{conservative} target makes the difference term smaller, giving less weight to that behavior and causing the robot to learn it less accurately. By contrast, an \textit{overoptimistic} target causes the robot to only focus on that behavior, ignoring other behaviors in the process.
In the following experiments we analyze how this reference loss impacts the robot's learning in the simulated manipulation task.

\subsection{Experiments Comparing Data Balancing}\label{sec:5c}

\begin{figure*}
    \centering
    \includegraphics[width=1\linewidth]{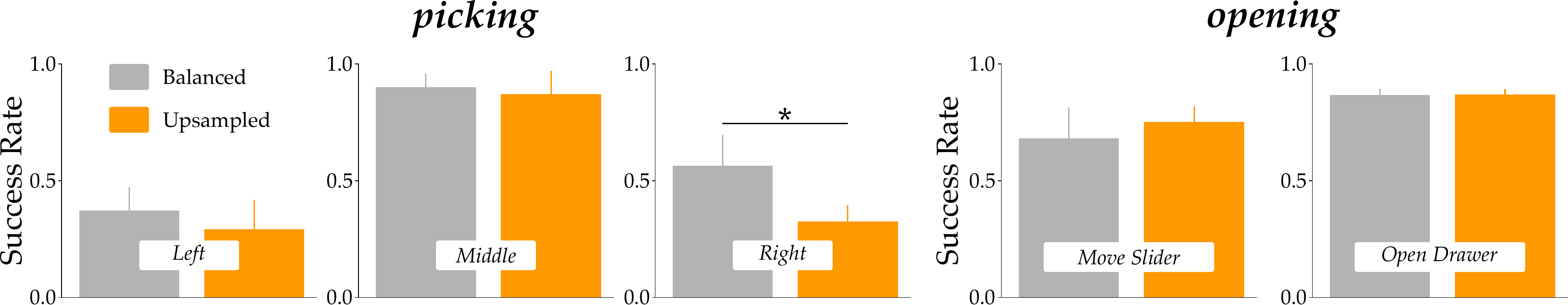}
    \caption{Results of the experiment demonstrating the benefits of upsampling data. We compare the performance of a policy trained on an imbalanced dataset with a policy trained on an upsampled dataset. In the imbalanced dataset one of the sub-policies is underrepresented with fewer demonstrations that the other sub-policies. In the upsampled dataset, we resample the fewer demonstrations to make the proportions of each sub-policy equal such that all sub-policies are equally represented. The plots compare the success rate of the trained policies across $100$ rollouts. Th gray bars represent the success rate for the imbalanced policy and the orange bars represent the success rate for the upsampled policies. (Left) The results for the three sub-policies in \textit{picking}. (Right) Results for the two sub-policies in \textit{opening}. The vertical bars show the standard deviation and $*$ indicates statistical significance.}
    \label{fig:manual_balanced}
\end{figure*}

In this section we empirically examine the effect of balancing the dataset on the policy's performance.
We conduct two experiments, \textit{picking} and \textit{opening} that we detail in Section \ref{sec:5c}, in the CALVIN environment.
To examine the effect of balancing, we perform a series of tests.
First, we examine whether uniformly weighing each sub-policy can achieve the same performance as a policy that is trained on a balanced dataset.
Next, we examine an existing method \textbf{Remix} \cite{hejna2024re} that re-weights the dataset using min-max optimization discussed in Section \ref{sec:5b}.
Specifically, this method uses a reference policy as the target loss $\mathcal{L}^{ref}$.
This reference policy is trained on the original imbalanced dataset.
Finally, we demonstrate a case that limits the applicability of a reference policy as the target loss.

\p{Procedure} In our first set of tests, we want to see if uniformly weighting the data can help the policy learn all the behaviors effectively.
As a baseline, we train a policy on a balanced dataset that has equal proportion of each sub-policy.
For \textit{picking} the dataset contains $21$ demos for each of the three sub-policies and for \textit{opening} the dataset consists of $15$ demos for each of the two sub-policies.
Same as in the previous experiments, we introduce imbalance by underrepresenting one of the sub-policies to have $9$ demos in \textit{picking} and $10$ demos in \textit{opening}.
To ensure the total number of demos is the same, we increase the number of demonstrations for other sub-policies to $27$ in \textit{picking} and $20$ in \textit{opening}.
We manually upsample the underrepresented demos to uniformly balance the proportion of data.
We train a new policy on this manually balanced dataset for our test.

\p{Results} \fig{manual_balanced} summarizes the results of the experiments. The plots show the success rate for the different sub-policies in the dataset.
The first three plots show the results for \textit{picking}, and the last two plots show the results for \textit{opening}.
In each plot, the gray bar shows the success rate of a policy that was trained on a balanced dataset, and the orange bar represents the success rate of a policy trained on the upsampled dataset, where underrepresented sub-policies were manually upsampled to achieve an equal proportion of data.
Overall, the performance is comparable in both cases.
We did not observe any statistically significant difference in success rates between the balanced and upsampled policies, except in one case: for \textit{picking}, performance decreases when the red block is on the right ($t=8.663, p < 0.001$)
This can be attributed to the relative difficulty of the behaviors.
Picking up the red block from the right is more difficult because of the switch which often obstructs the robot's path.
In this case, successful learning requires more diverse demonstrations, and naive upsampling of the same data can lead to overfitting and subsequently poorer performance.

From these results we conclude that upsampling underrepresented sub-policies can achieve performance comparable to a policy trained on a balanced dataset, but only when all behaviors are relatively of similar difficulty.
Furthermore, this naive approach treats sub-policies as independent, i.e., it ignores the potential overlap in behaviors.
For instance, in the \textit{picking} task demonstrations for picking the block from the left and right locations still provide useful information for learning to pick it from the middle \cite{dai2025civil}.
Ideally, we would want to weight the different demonstrations based on how much they contribute to learning the behaviors they represent.
As discussed in Section \ref{sec:5b}, this can be achieved by comparing how the demonstrations for the different sub-policies affect learning relative to a target loss $\mathcal{L}^{ref}$.

\begin{figure*}
    \centering
    \includegraphics[width=1\linewidth]{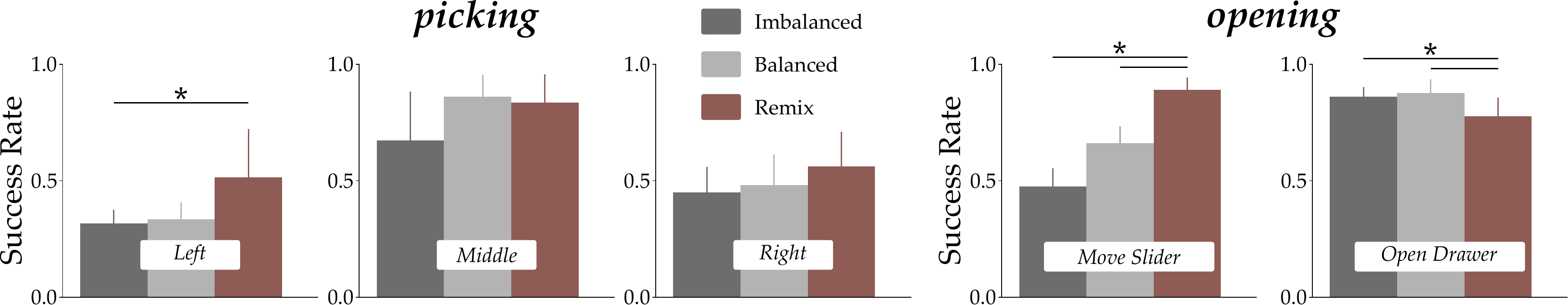}
    \caption{Results of the experiment testing existing approaches for reweighting/balancing datasets. We train three policies. First policy is trained on the imbalanced dataset, In \textit{picking}, the imbalanced dataset contains $27$ demonstrations for the left and right positions of the red block and $9$ demonstrations for the middle position of the block. In \textit{opening} the imbalanced dataset contains $10$ demonstrations for moving the slider and $20$ demonstrations for opening the drawer. We train a second policy on a balanced dataset that contains equal proportions of demonstrations for all sub-policies, i.e., $21$ for all three block positions and $15$ for both \textit{opening} sims. These two policies serve as baselines for comparing the performance of reweighting algorithm Remix \cite{hejna2024re}. Finally, we use Remix to balance the dataset and train a third policy on it. We compare the success rates of the three trained policies across $100$ rollouts. For reliable results we perform the experiments for $10$ trials. The first three plots show the success rates in \textit{picking} and the last two plot show the success rates of the three policies in \textit{opening}. The vertical bars show the standard deviation and $*$ indicates statistical significance.}
    \label{fig:remix}
\end{figure*}

\p{Procedure} In this next experiment, we want to see the effectiveness of the method \textbf{Remix} \cite{hejna2024re} in balancing the dataset.
\textbf{Remix} uses distributionally robust optimization \cite{sagawa2019distributionally} to solve the min-max problem and uses a reference policy as the target loss $\mathcal{L}^{ref}$.
This reference policy is trained on the given imbalanced dataset.
As in the previous experiments, we collect an imbalanced dataset where there are certain behaviors that are underrepresented. Particularly, in \textit{picking} the dataset consisted of $27$, $9$, and $27$ demonstrations for the sub-policies $\pi_1^p, \pi_2^p, \pi_3^p$, respectively; in \textit{opening} the dataset consisted of $10$, $20$ demonstrations for $\pi_1^o$, $\pi_2^o$ respectively.
We follow the procedure discussed by the authors in their paper for reweighting the dataset.
After reweighting the data we train a robot policy on the balanced dataset.
Additionally, we train two baseline policies --- one on the imbalanced dataset and one on a dataset that has equal proportions of different behaviors.
We compare the performance of the policy trained on the remix weighted dataset with these baselines.

\p{Results} We summarize the results of our experiments in \fig{remix}.
The plots compare the success rate of the policies trained on the imbalanced, balanced, and re-weighted datasets. 
We see that after reweighting the dataset using \textbf{Remix} there is an overall improvement in performance across all the sub-policies in both \textit{picking} and \textit{opening} tasks.
Particularly, in \textit{picking} the policy trained on the reweighted dataset is able to better learn each of the three sub-policies (left: $t=-2.789 , p < 0.05$). 
Interestingly, reweighting the data is also able to outperform the policy trained on the balanced dataset for some sub-policies.
In \textit{opening}, since the first sub-policy was underrepresented in the imbalanced dataset, the imbalanced policy is not able to adequately learn that behavior and the second sub-policy dominates.
However, the policy trained on the reweighted dataset can learn the first sub-policy significantly better (comparison with imbalanced: $t=-10.785, p < 0.001$, comparison with balanced: $t=-7.565, p < 0.001$).
The reweighted policy is also able to achieve better performance than the balanced policy.
However, this does come at the cost of a detriment in performance in the second behavior (comparison with imbalanced: $t=3.247$, $p < 0.05$, comparison with balanced: $t=4.023, p < 0.01$).

\begin{figure*}
    \centering
    \includegraphics[width=1\linewidth]{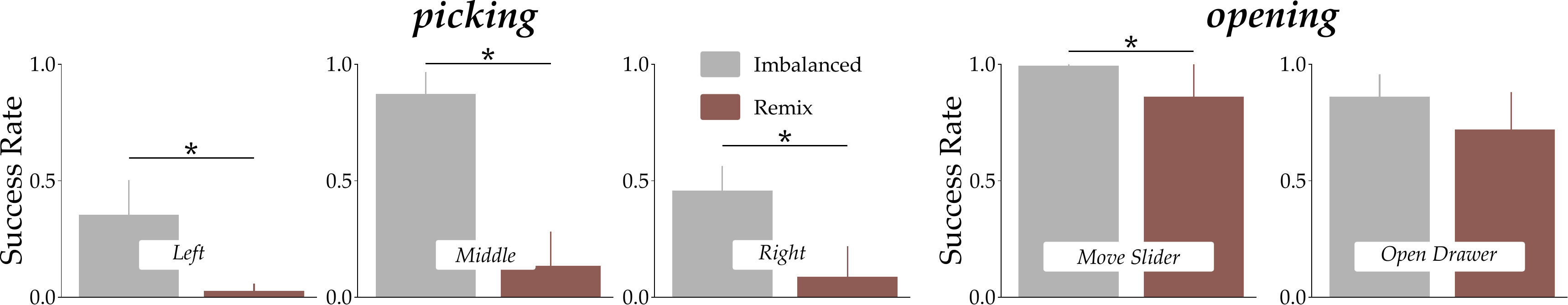}
    \caption{Results of the experiment that highlight limitations of existing balancing approaches. We have an imbalanced dataset that contains optimal and suboptimal demonstrations: in \textit{picking} the dataset contains $100$ optimal and $50$ suboptimal demonstrations. In these demonstrations the block is uniformly sampled from the three locations. In \textit{opening} the dataset contains $20$ optimal and $10$ suboptimal demonstrations for each of the two sub-policies totaling $60$ demonstrations. We use Remix \cite{hejna2024re} to balance the dataset. We train a policy on the imbalanced dataset as a reference and another policy on the balanced dataset. We perform this for $10$ trials. To compare the performance of these two learned policies we measure the success rate across $100$ rollouts. The vertical bars show the standard deviation and $*$ indicates statistical significance.}
    \label{fig:remix_suboptimal}
\end{figure*}

These results demonstrate the merits of methods like remix that can balance the dataset to re-weight demonstrations that are more difficult to learn.
This can enhance the policy's ability to learn every behavior present in the dataset regardless of whether they are under-represented.
Moreover, it can also utilize the dataset more efficiently by learning behaviors that are common across all demonstrations, for example goal-reaching actions.
But this is contingent on the choice of the reference target $\mathcal{L}^{ref}$.
In these experiments, since $\mathcal{L}^{ref}$ is trained on an imbalanced dataset, it is unable to learn the under-represented sub-policy correctly.
This increases the excess loss for the corresponding sub-policy allowing \textbf{Remix} to re-weight the dataset appropriately, i.e., it can help equally represent the corresponding behavior in the final policy.
However, there is a limitation to such an approach --- when the imbalanced dataset contains a small amount of suboptimal data $\mathcal{L}^{ref}$ can produce counter-intuitive results by increasing the excess loss for the suboptimal data.
 
\p{Procedure} To demonstrate this, we conduct another set of experiments in the \textit{picking} and \textit{opening} settings.
We utilize an imbalanced dataset with majority optimal data and a small amount of suboptimal data. 
Specifically, for \textit{picking} we supply $100$ optimal trajectories and $50$ suboptimal demonstrations where the block is randomly initialized in one of the three regions.
In \textit{opening}, we use $20$ optimal and $10$ suboptimal demonstrations for both $\pi_1^o$ and $\pi_2^o$ resulting in a total of $60$ demonstrations.
We repeat the previous procedure with this new mixture of data for the two experiments.

\p{Results} We present the results in \fig{remix_suboptimal}.
The three plots on the left correspond to the success rate of the three behaviors in \textit{picking}, while the two plots on the right correspond to the two behaviors observed in \textit{opening}.
The gray bars show the success rates of the policy trained on the imbalanced data and the brown bars show the success rates of the policy trained on the re-weighted data.
We see that there is a drop in performance for all the behaviors in the two experiments.
We found statistical significance for all cases except the behavior of opening a drawer (left: $t=5.978, p < 0.001$, middle: $t=11.026, p < 0.001$, right: $t=8.164, p < 0.001$, move the slider: $t=2.407, p < 0.05$).

These results present the failure case of \textbf{Remix}; when the dataset contains a small amount of suboptimal data, the reference policy trained on the imbalanced dataset is biased by the optimal data which is relatively abundant.
In other words, the reference policy is able to reliably learn the behaviors.
When using such a reference policy as the target loss $\mathcal{L}^{ref}$, the min-max optimization is compelled to down-weight the optimal data and up-weight the suboptimal data.
Consequently, a policy trained on a dataset balanced by Remix results in poorer performance than the policy trained on the imbalanced dataset.
\section{Learning Desired Balance}

\begin{figure*}
    \centering
    \includegraphics[width=1\linewidth]{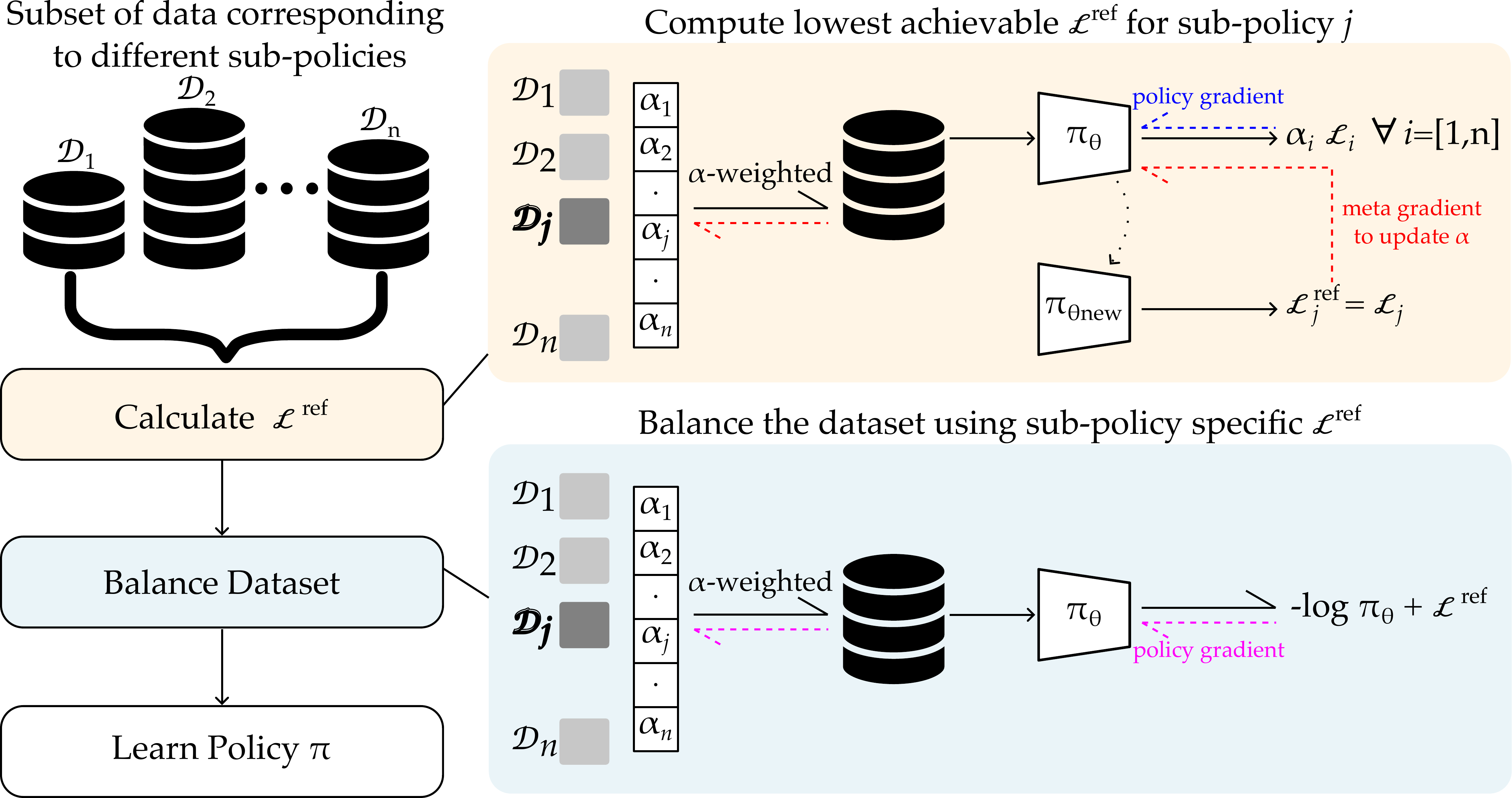}
    \caption{Schematic diagram of our method. The left side shows the workflow of our approach. Given an imbalanced dataset collected using multiple sub-policies or behaviors, we first aim to learn a target loss $\mathcal{L}^{ref}_j$ corresponding to each sub-policy $j \forall j = 1, \cdots, n$. Here $n$ is the number of sub-policies used to collect the imbalanced dataset. First, we must train a policy that achieves the best possible performance on sub-policy $j$. To this end, we optimize the objective given in \eq{P16} to find the best possible $\alpha^*_j$ corresponding to the sub-policy $j$. In our meta-gradient approach, we first update the parameters $\theta$ of the robot policy and use the updated policy to calculate the loss corresponding to the specific sub-policy j. The meta-gradient of this loss is then used to update the weights $\alpha$. The arrows indicate how the gradients pass through the different parameters. This meta-gradient approach is repeated to independently calculate the target loss for each sub-policy in the dataset.} Once we have calculated the target loss for each of the sub-policies, we balance the dataset by optimizing \eq{P13} and using our calculated $\mathcal{L}^{ref}$. Finally, we train a robot policy on the balanced dataset.
    \label{fig:method}
\end{figure*}

Based on related works within and outside of robotics we have discussed two ways of balancing a dataset. 
The first is a straightforward method that assigns equal weight to each behavior in the dataset. While this can be effective, it fails to recognize that some behaviors are inherently easier to learn than others.
We then described a more general approach that dynamically weighs the behaviors based on their training loss to account for differences in their learning difficulty.
This method requires specifying a reference loss that represents the desired level of accuracy for each behavior. 
Although prior research proposes reasonable assumptions for this reference, we experimentally show that these can be counterproductive in certain instances.
To address this limitation, we now discuss how the target accuracy can be determined in a more principled manner.

\subsection{Learning Reference Loss}\label{sec:6a}

As we demonstrated in the experiments in Section~\ref{sec:5c}, overestimating the target can cause the learner to solely focus on the hardest behavior --- either making desirable progress on it or, undesirably, degrading the performance on easier behaviors, until they all converge to the same accuracy. 
On the other hand, setting a conservative target may prevent the robot from improving on that behavior by increasing its weight.

Instead of relying on practical assumptions, we propose estimating the lowest achievable training loss for each behavior and setting it as its reference. More specifically, we focus on learning one sub-policy at a time and determine what would be the lowest training loss we can achieve if we re-weight the entire dataset just to learn this one sub-policy. Repeating this for each of the sub-policies ensures that we get a lower-bound estimate of the reference loss for each sub-policy that accurately accounts for the difficulty of learning the behavior from the given dataset.
We start by defining the expected loss for behavior $\pi_{i}$:
\begin{equation}\label{eq:P16}
    \mathcal{L}_{i} = \mathbb{E}_{(s,a)\sim\mathcal{D}_{i}} D_{KL}(\pi_{i}||\pi_{\theta_{\alpha}})
\end{equation}
Here $\pi_{\theta_{\alpha}}$ represents the policy trained on dataset $\mathcal{D}$ using weights $\alpha$ by optimizing: 
\begin{equation}\label{eq:P17}
    \mathcal{L}_{\alpha}(\theta) =  \sum_{i=1}^{k} \alpha_{i} \left(\mathbb{E}_{(s,a)\sim\mathcal{D}_{i}} D_{KL}(\pi_{i}||\pi_{\theta_{\alpha}})\right)
\end{equation}

To determine the minimum value for \eq{P16}, we must find weights $\alpha^{*}$ that make the best use of the entire dataset for learning $\pi_{i}$.
Note that $\alpha^{*}$ may not necessarily assign full weight to the data $\mathcal{D}_{i}$ for that behavior, i.e., the weights need not be $\alpha^{*}_{i} = 1$ and $\alpha^{*}_{i\neq i} = 0$. In many cases, incorporating information from other behaviors can actually improve the robot's ability to learn the target behavior.
To learn these weights and compute the reference loss for each behavior, we employ the following meta-gradient approach:

We first update the policy parameters using the loss in \eq{P17} for one (or a few) gradient step(s):
\begin{equation}\label{eq:P18}
    \theta_{new} = \theta - \beta_{1} \nabla_{\theta}\mathcal{L}_{\alpha}(\theta)
\end{equation}
Next, we update the weights by computing the gradients based on \eq{P16} and passing them back through the previous steps in \eq{P18}:
\begin{equation}\label{eq:P19}
    \alpha_{new} = \alpha - \beta_{2} \nabla_{\alpha}\mathcal{L}_{i}(\theta_{new})
\end{equation}

\begin{figure*}
    \centering
    \includegraphics[width=1\linewidth]{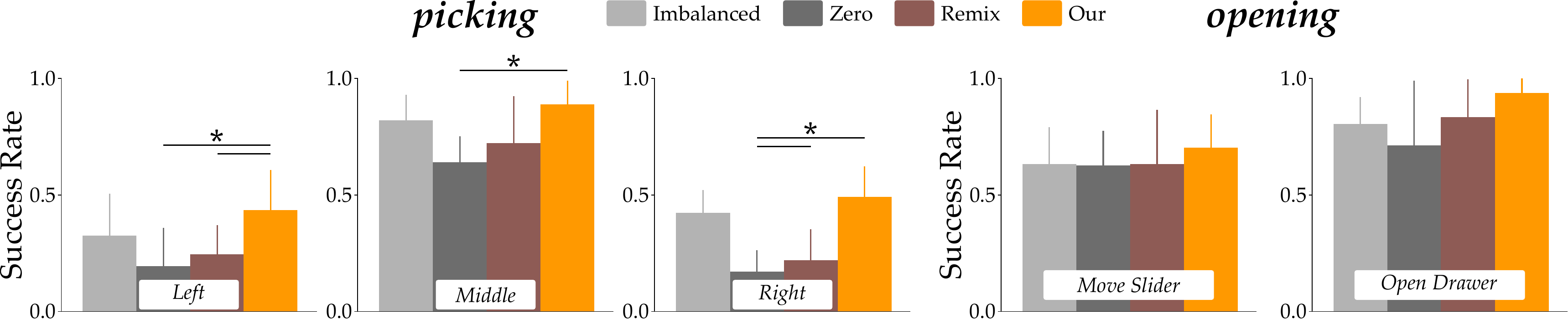}
    \caption{Testing our proposed approach against the baselines. We train four policies on a mixed dataset containing optimal and suboptimal demonstrations. The \textbf{Imbalanced} policy is trained on the original dataset without any modifications. In \textit{picking}, the imbalanced dataset contains $30$ suboptimal demonstrations and $60$ optimal demonstrations. In these demonstrations the initial position of the red block is uniformly sampled from the three regions. In \textit{opening} the imbalanced dataset contains $10$ suboptimal and $20$ optimal demonstrations for moving the slider and opening the drawer. We use different target loss $\mathcal{L}^{ref}$ in the objective given in \eq{P13} to balance the dataset and learn a policy. For \textbf{Zero} we set the target loss to be zero, i.e., $\mathcal{L}^{ref} = 0$. For \textbf{Remix} we follow the procedure given in \cite{hejna2024re} and use a reference policy for calculating the target loss. This reference policy is trained on the given imbalanced dataset. Finally, we train a policy on the dataset balanced using \textbf{Our} proposed meta-gradient approach. We compare the performance of the four policies using success rate as the metric. Specifically, perform $10$ trials and execute each of the trained policy for a $100$ rollouts to calculate the success rates. The plots show the success rate of the four policies --- imbalanced, Zero, Remix, and Our --- in \textit{picking} (first three plots) and \textit{opening} (last two plots). The vertical bars show the standard deviation and $*$ indicates statistical significance.}
    \label{fig:results_method}
\end{figure*}

Our proposed approach is summarized in \fig{method}.
We use the converged weights $\alpha_{new} = \alpha^{*}$ to compute $\mathcal{L}_{i}^{min}$ and set it as the reference $\mathcal{L}_{i}^{ref}$ for that behavior. 
The reference losses $\mathcal{L}_{1}^{min}, \ldots, \mathcal{L}_{k}^{min}$ computed by our approach represent the best training performance for each behavior, avoiding the pitfalls of overestimating or underestimating the targets.
Plugging these target losses back into the objective in \eq{P13} yields a \textit{balance} where the robot's policy imitates each behavior with an equal margin of error from its best-possible loss.
The following experiments demonstrate how our approach improves upon previous methods for balancing datasets based on training loss.

\subsection{Comparing with Baselines}\label{sec:6b}

In this section we test our proposed method for calculating the target $\mathcal{L}^{ref}$ for different sub-policies present in the dataset.
We intend to examine how the choice of $\mathcal{L}^{ref}$ affects the data balancing scheme and the final policy's performance.

\p{Procedure} We use the two experiments \textit{picking} and \textit{opening} from the previous sections.
To train a policy, we collect a dataset with a small amount of suboptimal data.
Particularly, in the case of \textit{picking} the dataset contains $60$ optimal demonstrations and $30$ suboptimal demonstrations, and for \textit{opening} the dataset contains $20$ optimal demonstrations and $10$ suboptimal demonstrations for the two behaviors --- opening a drawer and moving a slider.
We first train a policy on the imbalanced dataset, then use different approaches to balance the data and subsequently train a policy on this balanced dataset.

\p{Independent Variables} We compare three different ways of calculating the target loss. First, we use \textbf{Our} method which uses a meta-gradient approach to calculate the target loss $\mathcal{L}^{ref}$.
We compare this with \textbf{Remix} and \textbf{Zero}.
As mentioned previously, \textbf{Remix} uses a reference policy that is trained on the imbalanced dataset as the target loss.
On the other hand, \textbf{Zero} does not use a target loss for the sub-policies, i.e., $\mathcal{L}^{ref} = 0$.

\begin{table*}[]
\centering
\begin{tabular}{clllll}
\hline
\multicolumn{1}{l}{} & \multicolumn{1}{c}{Method} & \multicolumn{2}{c}{Suboptimal Data} & \multicolumn{2}{c}{Optimal Data} \\
\multicolumn{1}{l}{} & \multicolumn{1}{c}{} & \multicolumn{1}{c}{$\alpha$} & \multicolumn{1}{c}{$\mathcal{L}^{ref}$} & \multicolumn{1}{c}{$\alpha$} & \multicolumn{1}{c}{$\mathcal{L}^{ref}$} \\ \hline
\multirow{4}{*}{\textit{picking}} & \textbf{Imbalanced} & 0.33 & - & 0.67 & - \\
 & \textbf{Zero} & 0.5 & 0 & 0.5 & 0 \\
 & \textbf{Remix} & 0.5348 & 0.520 & 0.4652 & 0.9337 \\
 & \textbf{Our} & \textbf{0.0633} & 1.893 & \textbf{0.9367} & 1.505 \\ \hline
\multirow{4}{*}{\textit{opening}} & \textbf{Imbalanced} & 0.333 & - & 0.667 & - \\
 & \textbf{Zero} & 0.516 & 0 & 0.484 & 0 \\
 & \textbf{Remix} & 0.5 & 0.089 & 0.5 & 0.0574 \\
 & \textbf{Our} & \textbf{0.0563} & 2.018 & \textbf{0.9437} & 1.626 \\
\hline
\end{tabular}
\caption{The table shows the assigned weights learned by \textbf{Our} method and the baselines. Additionally, it shows the calculated target loss $\mathcal{L}^{ref}$ corresponding to the optimal and suboptimal demonstrations. This target loss is used to determine the relative accuracy we want to achieve corresponding to the different behaviors in the dataset. A lower value of $\mathcal{L}^{ref}$ means we want to achieve a higher accuracy on the respective behavior.}
\label{tab:weights_6}
\end{table*}

\p{Results} We summarize the results of our experiments int \fig{results_method}.
The plots illustrate the success rate achieved by the policy for the different sub-policies present in the dataset in the \textit{picking} experiments (three plots on the left) and in the \textit{opening} experiment (two plots on the right).
We see that \textbf{Our} method is able to achieve better performance than the baselines:
while \textbf{Our} method improves the performance of the policy compared to the imbalanced policy, the baselines see a reduction in the performance.
A repeated measures ANOVA revealed that the methods had a significant effect on the success rate in \textit{picking} ($F(3, 27) = 11.045, p < 0.001$), but not in \textit{opening} ($F(3, 27)=1.739, p = 0.183$).
In \textit{picking}, \textbf{Our} saw a significantly better performance for all three sub-policies than \textbf{Zero} (left: $p < 0.05$, middle: $p < 0.05$, right: $p < 0.001$), whereas compared to \textbf{Remix} our method only saw significance for left ($p < 0.001$) and right locations ($p < 0.001$).
In \textit{opening}, while we did not find statistical significance, we do see that \textbf{Our} method is able to achieve higher performance than the baselines.

Additionally, we report the weights and the target losses for the optimal and suboptimal datasets in Table \ref{tab:weights_6}.
We see that the baselines \textbf{Zero} and \textbf{Remix} assign equal weights to the optimal and suboptimal data, which leads to poor performance when trained on the reweighted data.
In contrast, \textbf{Our} method upweights the optimal data, resulting in a policy that achieves higher performance than the baselines.
From the weights it is evident that \textbf{Our} method relies on nearly all of the optimal data while using very little of the suboptimal data.
This explains the improvement in performance of our policy compared to the imbalanced policy.
The lack of significant improvement in \textit{opening} could be attributed to the simplicity of the tasks: since the desired object (slider and drawer) are always at the same location, the negative impact of a small amount of suboptimal data is not that prominent. 
Consequently, there is only a slight improvement after balancing.

\subsection{Limitations of Offline Data Balancing}\label{sec:6c}

Our approach advances the capabilities of current data balancing approaches by automatically determining the desired accuracy for each dataset behavior.
However, the results in Section~\ref{sec:6b} indicate that there is still some gap between the performance achieved using manually selected weights and the learned weights. 

This lack of performance highlights a key limitation of offline data curation approaches. Without access to task-level evaluations, existing approaches, including our proposed method, are limited to evaluating the accuracy of imitating the data samples.
However, imitation accuracy is not always proportional to task success.
Returning to the example of dropping a ball into a large bin versus throwing it at a moving target: even if the actions for the dropping task are learned less accurately during training, we may be able to perform the task successfully at test time. In contrast, even minimal imitation errors may lead to failures in the throwing task.

Ideally, instead of equalizing training loss across behaviors, we would balance their \textit{test} performance (i.e., the performance of the policy during rollouts).
Achieving this offline would require either real-world deployment, a high-fidelity simulator, or a reliable task-level value function --- resources that are typically expensive or impractical.
We therefore advocate for future efforts that focus on efficiently utilizing real-world interactions.
One promising direction is to start with a conservative reference and iteratively relax it based on human feedback.
If the reference remains above $\mathcal{L}_{i}^{min}$, we set the target to be $\mathcal{L}_{i}^{ref} = \mathcal{L}_{i}^{success}$. Otherwise, we retain the original target, i.e., $\mathcal{L}_{i}^{ref} = \mathcal{L}_{i}^{min}$ to prevent the robot from overestimating its learning capacity.
We look forward to future efforts that address this gap by developing methods that support task-level refinement with minimal supervision.

\section{Conclusion}

Within this manuscript we argue that imitation learning is affected by the distribution of the training data.
Specifically, we explore settings where the robot is learning complex, multi-part behaviors from human demonstrations.
These multi-part behaviors could be based on constraints: e.g., teaching the robot to reach a goal while avoiding obstacle regions.
Alternatively, these behaviors could be composed of multiple tasks: e.g., teaching the robot to manipulate a cup at different locations on a table.
In either case, real-world human demonstrations are inevitably \textit{imbalanced}.
The human provides more state-action pairs for some parts of the task than others.
Returning to our example, perhaps the human provides twice as many state-action pairs that move towards the goal than state-action pairs which move away from the constraint regions.

Existing imitation learning paradigms largely ignore this dataset imbalance.
Within behavior cloning --- for instance --- every state-action pair is given equal weight by the loss function.
We theoretically prove that this default approach results in imbalanced policies, where the robot overly focuses on the most represented behaviors, and may fail to learn behaviors the human shows less frequently.
Our experimental results support this conclusion, and highlight how --- even in simple settings --- dataset imbalance can cause state-of-the-art learning methods to fall short.
We next analyze algorithms that autonomously rebalance the dataset without human intervention.
These algorithms adjust the ratio (or weight) of elements of an offline dataset, and do not require any new data gathering.
Our theoretical and empirical results show that autonomously reweighting the dataset has the potential to improve learned policy performance without changing the robot's learning algorithm.
But we also find that the best method to reweight the dataset depends on several factors.
Our work formulates the pros and cons of different autonomous approaches for reweighting, providing guidelines for future designers.
We conclude by introducing a novel meta-gradient approach for autonomously rebalancing offline datasets.
This approach addresses the primary limitations of current methods, and our experiments highlight that the meta-gradient method improves downstream robot learning.

\p{Future Work}
We see this work as a first step towards understanding the dataset characteristics necessary for effective imitation learning.
We argue that dataset imbalance is a fundamental and practical issue, and this issue is not fully explored by prior works.
In our future research, we are looking into extending this analysis to large-scale datasets and applying our approach to automatically balance these datasets for more effective generalist policies.
\section{Declarations}

\p{Funding} This research was supported in part by the USDA National Institute of Food and Agriculture, Grant 2022-67021-37868.

\p{Conflict of Interest} The authors declare that they have no conflicts of interest.

\p{Author Contribution} S.P.: Conceptualization, Investigation, Software, Methodology, Formal analysis, Writing - original draft.
H.N.: Conceptualization, Investigation, Methodology, Writing - original draft.
D.L.: Conceptualization, Supervision, Funding Acquisition, Writing - review and editing.


\bibliography{bibtex}
\bibliographystyle{spmpsci}

\appendix
\section{Appendix}
\subsection{Proof}\label{sec:appendix}

In Section \ref{sec:5b}, we analyze how to balance a dataset by taking into consideration the relative difficulty of each behavior.
As mentioned previously, we perform a min-max operation with the objective in \eq{P13} where we iterate between maximizing the objective to update the weights $\alpha$ and minimizing it to learn the policy parameters $\theta$.
Here we detail how we compute the gradient of the objective with respect to $\alpha$ (as shown in \eq{P14}) and derive the convergence condition for the maximization step stated in \eq{P15}.

\begin{figure*}
    \centering
    \includegraphics[width=1\linewidth]{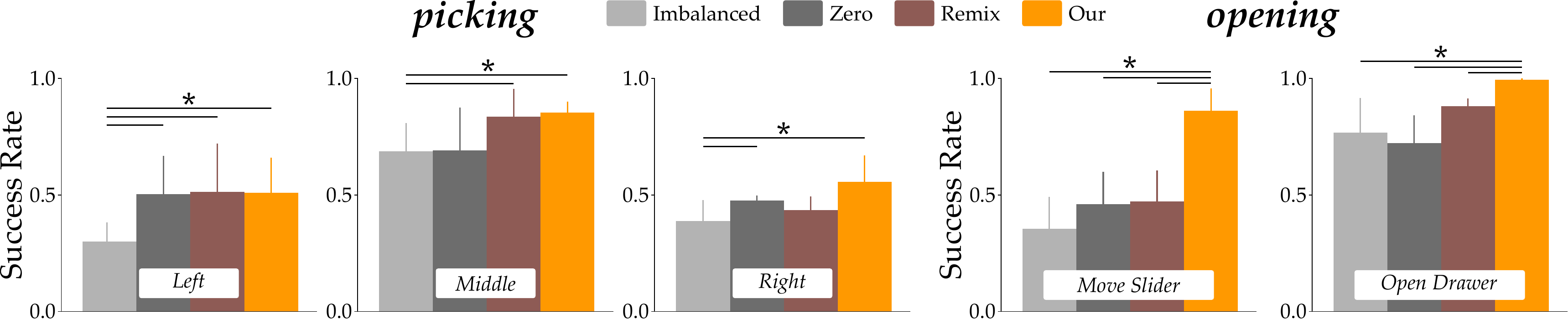}
    \caption{Testing our proposed approach against the baselines. We train four policies on a mixed dataset containing optimal demonstrations. The \textbf{Imbalanced} policy is trained on the original dataset without any modifications. In \textit{picking}, the imbalanced dataset contains $27$ demonstrations for the left and right locations of the red block but only $9$ demonstrations for the middle location. In these demonstrations the initial position of the red block is uniformly sampled from the three regions. In \textit{opening} the imbalanced dataset contains $10$ suboptimal and $20$ optimal demonstrations for moving the slider and opening the drawer. We use different target loss $\mathcal{L}^{ref}$ in the objective given in \eq{P13} to balance the dataset and learn a policy. For \textbf{Zero} we set the target loss to be zero, i.e., $\mathcal{L}^{ref} = 0$. For \textbf{Remix} we follow the procedure given in \cite{hejna2024re} and use a reference policy for calculating the target loss. This reference policy is trained on the given imbalanced dataset. Finally, we train a policy on the dataset balanced using \textbf{Our} proposed meta-gradient approach. We compare the performance of the four policies using success rate as the metric. Specifically, perform $10$ trials and execute each of the trained policy for a $100$ rollouts to calculate the success rates. The plots show the success rate of the four policies --- imbalanced, Zero, Remix, and Our --- in \textit{picking} (first three plots) and \textit{opening} (last two plots). The vertical bars show the standard deviation and $*$ indicates statistical significance.}
    \label{fig:appendix}
\end{figure*}

We recognize that $\alpha$ represents the \textit{relative weights}, which makes each maximization step the following constrained optimization problem:
\begin{align*}
    &\max_{\alpha} \sum_{i=1}^{k} \alpha_{i} \left( \mathbb{E}_{(s,a) \sim\mathcal{D}_{i}} D_{KL} (\pi_{i} || \pi_{\theta}) - \mathcal{L}_{i}^{ref} \right)\\
    &\text{s.t.} \quad \sum_{i=1}^{k} \alpha = 1
\end{align*}

We solve the above problem using the method of Lagrange multipliers, where the Lagrangian is defined as:
\begin{align}\label{eq:P20}
    \mathcal{L}(\alpha, \lambda) =& \sum_{i=1}^{k} \alpha_{i} \left( \mathbb{E}_{(s,a) \sim\mathcal{D}_{i}} D_{KL} (\pi_{i} || \pi_{\theta}) - \mathcal{L}_{i}^{ref} \right) \nonumber\\
    &+ \lambda \left(\sum_{i=1}^{k} \alpha - 1\right)
\end{align}
Here $\lambda$ is the Lagrange multiplier. We then take the partial derivative of \eq{P20} with respect to the weight $\alpha_{i}$ of the $i$-th behavior to get:
\begin{align}\label{eq:P21}
    \frac{\partial \mathcal{L}}{\partial \alpha_{i}} &= \mathbb{E}_{(s,a) \sim\mathcal{D}_{i}} D_{KL} (\pi_{i} || \pi_{\theta}) - \mathcal{L}_{i}^{ref} + \lambda \nonumber\\
    &= \delta_{i} + \lambda
\end{align}

Doing so for each behavior results in \eq{P14} reported in Section~\ref{sec:5b}. Next, to solve for $\lambda$, we apply the constraint that the new weights should also sum to $1$:
\begin{equation*}
    \sum_{i=1}^{k} \alpha_{new, i} =     \sum_{i=1}^{k} \alpha_{i} - \eta \frac{\partial \mathcal{L}}{\partial \alpha_{i}} = 1
\end{equation*}
where $\eta$ is the learning rate. Since the previous $\alpha_i$ also sum to $1$, we get that:
\begin{align}
    \eta \sum_{i=1}^{k} \frac{\partial \mathcal{L}}{\partial \alpha_{i}} &= 0 \nonumber \\
    \sum_{i=1}^{k} \delta_{i} + \lambda &= 0 \nonumber\\
    \lambda &= - \frac{1}{k} \sum_{i=1}^{k} \delta_{i}
\end{align}

Finally to derive the convergence condition in \eq{P15}, we equate the partial derivatives in \eq{P21} to $0$:
\begin{align}\label{eq:P23}
    \frac{\partial \mathcal{L}}{\partial \alpha_{i}} = \delta_{i} + \lambda &= 0 \nonumber \\
    \delta_{i} &= -\lambda
\end{align}
In this way, we obtain the result that the maximization step converges when all differences are equal, i.e., $\delta_{1} = \ldots = \delta_{k} = \frac{1}{k} \sum_{i=1}^{k} \delta_{i}$.

\subsection{Simulation}\label{sec:appendix_2}
In Section \ref{sec:6c} we perform experiments to showcase the limitations of existing data balancing approaches.
Specifically, we test different metrics for the target loss $\mathcal{L}^{ref}$ and show that our meta-gradient approach outperforms the baselines. 
In this section, we test our method against baselines \textbf{Remix} and \textbf{Zero} in a setting where the dataset is optimal.

\p{Procedure} We repeat the procedure from Section \ref{sec:6c} where we use two experiments \textit{picking} and \textit{opening}.
We have a dataset with optimal demonstrations: for \textit{picking} there are $27$ demonstrations for the left and right positions of the block and $9$ demonstrations for the middle position; for \textit{opening} there are $10$ demonstrations for moving the slider and $20$ demonstrations for opening the drawer.
We train a policy on the imbalanced dataset and use \textbf{Zero}, \textbf{Remix}, and \textbf{Our} methods to balance the dataset.
We then train a policy on the balanced dataset and compare the performance of each learned policy.

\p{Results} The plots in \fig{appendix} summarize the results. We compare the success rate achieved by the learned policy for different behaviors in the two settings of \textit{picking} and \textit{opening}. 
A repeated measures ANOVA revealed that the methods had significant effect on the success rate (\textit{picking}: $F(3, 27)=7.790, p<0.001$, \textit{opening}: $F(3, 27) = 49.407, p < 0.001$).
We find that while all three methods are able to achieve better performance than the policy trained on the imbalanced dataset, only \textbf{Our} method achieves statistical significance for all the behaviors in both experiments.
Particularly, \textbf{Zero} is significantly better than the imbalanced policy for the left ($p < 0.05$) and right ($p < 0.05$) locations of the red block but performs as poorly as the imbalanced policy when the block is in the middle of the table.
Similarly, \textbf{Remix} does significantly better than the imbalanced policy for left ($p < 0.05$) and middle ($p < 0.05$) locations of the red block but is only slightly better for the locations on the right.
In contrast, \textbf{Our} method achieves significance for all three behaviors ($p < 0.05$).
In \textit{opening}, we see that while \textbf{Remix} achieves better performance than the imbalanced policy for both behaviors, \textbf{Our} method is significantly better than all three baselines ($p < 0.05$).

Overall, these results suggest that not only is our method able to balance heterogeneous optimal data to achieve better performance, our previous experiments from Section \ref{sec:6c} indicates that it can also overcome the limitations of state-of-the-art methods like Remix.
\end{document}